\def\hlinewd#1{%
\noalign{\ifnum0=`}\fi\hrule \@height #1 %
\futurelet\reserved@a\@xhline}
\newcommand{\mK}{\mathsf{K}}
\newcommand{\mF}{\mathsf{F}}
\newcommand{\mA}{\mathsf{A}}
\newcommand{\mY}{\mathsf{Y}}
\newcommand{\R}{\mathbb{R}}
\def\hlinewd#1{%
\noalign{\ifnum0=`}\fi\hrule \@height #1 %
\futurelet\reserved@a\@xhline}
\theoremstyle{thmstyleone}%
\newtheorem{theorem}{Theorem}
\newtheorem{corollary}{Corollary}
\newtheorem{proposition}[theorem]{Proposition}%
\theoremstyle{thmstyletwo}%
\newtheorem{remark}{Remark}%
\theoremstyle{thmstylethree}%
\newtheorem{definition}{Definition}%
\begin{document}

\title[Greedy feature selection]{Greedy feature selection: Classifier-dependent feature selection via greedy methods}



\author[1,2]{\fnm{Fabiana} \sur{Camattari}}\email{camattari@dima.unige.it}
\equalcont{These authors contributed equally to this work.}

\author[1,2*]{\fnm{Sabrina} \sur{Guastavino}}\email{guastavino@dima.unige.it}
\equalcont{These authors contributed equally to this work.}

\author[3]{\fnm{Francesco} \sur{Marchetti}}\email{francesco.marchetti@unipd.it}

\author[1,2]{\fnm{Michele} \sur{Piana}}\email{piana@dima.unige.it}

\author[4]{\fnm{Emma} \sur{Perracchione}}\email{emma.perracchione@polito.it}

\affil[1]{\orgdiv{MIDA, Dipartimento di Matematica}, \orgname{Università di Genova}, \orgaddress{\street{via Dodecaneso 35}, \city{Genova}, \postcode{16145}, 
\country{Italy}}}

\affil[2]{\orgdiv{Osservatorio Astrofisico di Torino}, \orgname{Istituto Nazionale di Astrofisica}, \orgaddress{\street{via Osservatorio 20}, \city{Pino Torinese, Torino}, \postcode{10025}, 
\country{Italy}}}

\affil[3]{\orgdiv{Dipartimento di Matematica "Tullio Levi-Civita"}, \orgname{Università di Padova}, \orgaddress{\street{via Trieste 63}, \city{Padova}, \postcode{35121}, 
\country{Italy}}}

\affil[4]{\orgdiv{Dipartimento di Scienze Matematiche \lq \lq Giuseppe Luigi Lagrange\rq }, \orgname{Politecnico di Torino}, \orgaddress{\street{Corso Duca degli Abruzzi 24}, \city{Torino}, \postcode{10129}, 
\country{Italy}}}


\abstract{The purpose of this study is to introduce a new approach to feature ranking for classification tasks, called in what follows greedy feature selection. In statistical learning, feature selection is usually realized by means of methods that are independent of the classifier applied to perform the prediction using that reduced number of features. Instead, greedy feature selection identifies the most important feature at each step and according to the selected classifier. In the paper, the benefits of such scheme are investigated theoretically in terms of model capacity indicators, such as the Vapnik-Chervonenkis (VC) dimension or the kernel alignment, and tested numerically by considering its application to the problem of predicting geo-effective manifestations of the active Sun.}

\keywords{statistical learning, machine learning, classification, feature selection, greedy methods}



\maketitle

\section{Introduction}

Greedy algorithms are currently mainly used to iteratively select a reduced and appropriate number of examples according to some error indicators, and hence to produce surrogate and sparse models \cite{DUTTA2021110378,DSW,SH2017,Wenzel2021a,Wenzel2022a,WH2013}. The ambition of this paper is to analyze and extend greedy methods to work in the significantly more challenging case of feature reduction, i.e., as the computational core for feature-ranking schemes in the framework of classification issues. 

The importance of this application follows from the fact that sparsity enhancement is a crucial issue for statistical learning procedures, which might be performed, e.g., via Fisher score \cite{Fisher}, methods based on mutual information \cite{Peng}, Relief and its variants \cite{Robnik-Sikonja200323}. Indeed, supervised learning models are usually trained on a reduced number of features, which are typically obtained by means of either Lasso regression \cite{lasso} or
variations of the classical Lasso (Group Lasso \cite{yuan2006group}, Adaptive Lasso \cite{zou2006adaptive}, Adaptive Poisson re-weighted Lasso \cite{guastavino2019consistent} to mention a few) or linear Support Vector Machine (SVM) feature ranking \cite{Guyon}. However, at the state of the art, for a given classifier, these algorithms are often not able to actually capture all the corresponding most relevant features for that classifier. More specifically, in the case of Lasso and its generalizations \cite{freijeiro2022critical}, drawbacks in feature selection ability are shown when there
exists dependence structures among covariates. Therefore, here we designed feature-based greedy methods that iteratively select the most important feature at each step in a classifier-dependent fashion. We point out that any classifier can be used in this scheme, which allows a totally model-dependent feature ranking process.

At a more theoretical level, this study investigated the effectiveness of the greedy scheme in terms of the Vapnik-Chervonenkis (VC) dimension \cite{Vapnik71}, which is a complexity indicator common to any classifier, such as Feed-forward Neural Networks (FNNs), and it is related to the empirical risk \cite{Bartlett02}. As a particular instance, we further investigated how greedy methods behave for kernel-based classifiers, such as SVMs \cite{Taylor}, and in doing so we considered a particular complexity score, known as kernel alignment. These theoretical findings have been used for a case study concerning the classification and prediction of severe geomagnetic events triggered by solar flares.

Solar flares \cite{piana2022hard} are the most explosive manifestations of the active Sun and the main trigger of space weather \cite{schwenn2006space}. They may be followed by coronal mass ejections (CMEs) \cite{kahler1992solar}, which, in turn, may generate geomagnetic storms potentially impacting both space and on-earth technological assets \cite{gonzalez1994geomagnetic}. Data-driven approaches forecasting these events leverage machine learning algorithms trained against historical archives containing physical features extracted from remote-sensing data such as solar images or time series of physical parameters acquired from in-situ instruments \cite{bobra2015solar,camporeale2018machine,florios2018forecasting,2023ApJ...954..151G,telloni2023prediction}. These archives systematically provide a huge amount of descriptors and it is currently well-established that this redundancy of information significantly hampers the prediction performances of the forecasters \cite{campi2019feature}. Our feature-based greedy scheme was applied in this context, in order to identify among the features the redundant ones and consequently to improve the classification performances.

The paper is organized as follows. Section \ref{Greedy} introduces our greedy feature selection scheme, which will be motivated thanks to the theoretical analysis in Subsections \ref{VC} and \ref{SVM}. Section \ref{NE} describes the application of greedy feature selection to both simulated and real datasets. 
Our conclusions are offered in Section \ref{conclusion}.

\section{Greedy feature ranking schemes}
\label{Greedy}

Feature reduction (or feature subset selection) techniques can be classified into filter, wrapper, and embedded methods: filter methods identify an optimal subset based on general patterns in data \cite{bommert2020benchmark}; wrapper methods use a machine learning algorithm to search for the optimal subset by considering all possible feature combinations \cite{bajer2020wrapper}; in embedded methods 
feature selection is integrated or built into the classifier algorithm
\cite{zebari2020comprehensive}. This proposed greedy scheme falls in the class of wrapper feature subset selection methods, but unlike the classical approaches, such as recursive feature elimination (RFE) or recursive feature augmentation (RFA) \cite{Guyon} and forward step-wise selection \cite{StepwiseSelectionReference}, the proposed greedy method is fully model-dependent. 

Given a set of examples depending on several features, greedy methods are frequently used to find an optimal subset of examples and, for such task, since they might be target-dependent, they have already been proved to be effective (see e.g. \cite{Wenzel2021a,Wenzel2022a,doi:10.1137/23M1551201}). Here, instead of focusing on the examples, we drive our attention towards the problem of feature selection. To this aim, we considered a binary classification problem with training examples 
\begin{equation}\label{eq:data}
    \Xi = X \times Y = \{ (\boldsymbol{ x}_1, y_1),  \ldots ,(\boldsymbol{ x}_n, y_n)\},
\end{equation}
where $\boldsymbol{ x}_i \in \Omega \subseteq \mathbb{R}^d$ and $y_i \in   \mathbb{R}$. For the particular case of the binary classification setting, we fix $y_i \in \{-1, +1\}$. 

In the machine learning framework, feature reduction is typically performed by means of linear models, and once the features are identified, non-linear methods like neural networks are applied to predict the given task. However, the fact that some specific feature could be useful for some classifier does not imply that the same feature is relevant for any classification model, and this is probably the main weakness of current feature reduction methods in this context. Conversely, our feature-based greedy method (see e.g. \cite{Temlyakov2008} for a general overview of greedy methods) will consist in iteratively selecting the most important feature at each step and in agreement with the considered classifier.

To reach this objective, as usually done, we split the initial dataset $X$ into a training set, which consists of $\{(X^{(t)},Y^{(t)})\}$, and a validation set made of $\{(X^{(v)},Y^{(v)})\}$. Then, at the $k$-th greedy step, $k-1$ features have already been selected (without loosing generalities the first $k-1$) and we then train $d-k$ models ${\cal M}_p$ with $x_1,\ldots, x_{k-1},x_p$, $p=k, \ldots, d$. Then, given an accuracy score $\mu$ (the largest the better), we select the $k$-th feature as 
\begin{equation}\label{eq:1}
x_k = \arg \max_{p=k, \ldots, d} \mu({\cal M}_p(X^{(v)}),Y^{(v)}).     
\end{equation}

We point out that any model can be used in \eqref{eq:1}, and this implies a totally target-dependent feature selection, which also accounts for the model used to predict a given task.

In the following we investigate the effects of the proposed scheme in terms of VC dimension and for particular instances of kernel learning theory, while a stopping criterion for the algorithm is discussed later in view of the incoming analysis and trade-off remarks. 

\subsection{The VC dimension in the greedy framework}
\label{VC}

We consider the dataset \eqref{eq:data}, where we now suppose that $\Omega=\bigotimes_{k=1}^d\Omega^k$ with $\Omega^k=[a_k,b_k] \subset \mathbb{R}$. Given a classifying function $f:\Omega\longrightarrow Y$ we consider the \emph{zero-one loss function}
\begin{equation*}\label{01loss}
c(\boldsymbol{x},y,f)= \frac{1}{2}|f(\boldsymbol{x})-y|,
\end{equation*}
which is $0$ if $f(\boldsymbol{x})=y$ and $1$ otherwise. From this loss, we can define the {\em empirical risk}
\begin{equation*}\label{emprisk}
\hat{e}(\Xi,f)= \frac{1}{n}\sum_{i=1}^{n}{c(\boldsymbol{x}_i,y_i,f)}.
\end{equation*}
Assuming that $\Xi$ is sampled from some fixed unknown probability distribution $p(\boldsymbol{x},y)$ on $\Omega\times Y$, we note that the empirical risk is the empirical mean value of so-called {\em generalization risk}, i.e.:
\begin{equation*}\label{risk}
		e(f)= \int_{\Omega\times Y}{c(\boldsymbol{x},y,f)\:\mathrm{d}p(\boldsymbol{x},y)}, 
\end{equation*}
i.e., it is the mean value of $c$ averaged over all possible test samples generated by $p(\boldsymbol{x},y)$, and hence it represents the misclassification probability. However, minimizing the empirical risk does not necessarily correspond to a low generalization risk (refer, e.g., to \cite[\S 5]{Scholkopf02} or \cite[\S 5 \& \S 6]{Vapnik98}). Indeed, this might lead to poor generalization capability in the sense that statistical learning theory already proved that the generalization {\em capacity} of a given model is somehow inversely related to the empirical risk. Such general idea can be formalized in different ways, such as via the VC dimension. In order to define it, we need to introduce the concept of \textit{shattering} in this context. Let $\Xi_1,\dots,\Xi_{2^n}$ be all the different datasets obtainable taking all possible configurations of labels assigned to the data. A class $\mathcal{F}$ shatters the set $X$ if for every dataset $\Xi_i$, $i=1,\dots,2^n$, there exists a function $f:\Omega\longrightarrow Y$, $f\in\mathcal{F}$, such that $\hat{e}(\Xi_{i},f)=0$.

\begin{definition}\label{def:vc}
	The VC dimension of a class $\mathcal{F}$ of classifying functions is the largest natural number $s$ such that there exists a set $X$ of $s$ examples that can be shattered by $\mathcal{F}$. If such $s$ does not exist, then the VC dimension is $\infty$.
\end{definition}

Let us consider a class $\mathcal{F}$ of classifying functions on $\Omega$  whose VC dimension is $s<n$. Then, if $f\in\mathcal{F}$ and $\delta>0$, the bound
\begin{equation*}
		e(f)\le \hat{e}(\Xi,f)+C(s,n,\delta),
\end{equation*}
holds with probability $1-\delta$, where the so-called capacity term is
\begin{equation*}
		C(s,n,\delta)= \sqrt{\frac{1}{n}\bigg(s\bigg(\log{\frac{2n}{s}}+1\bigg)+\log{\frac{4}{\delta}}\bigg)}.
\end{equation*}
The generalization risk (and thus the test error) is bounded by the sum between the empirical risk (that is the training error) and the {capacity term} of the class, which {is monotonically increasing with the VC dimension}. If we choose a \emph{poor} class, we get a low VC dimension but a possibly high empirical risk; this situation is usually called \emph{underfitting}. On the other hand, by choosing a \emph{rich} class we can obtain a very small empirical risk, but the VC dimension, and thus the capacity term, is likely to be large; this condition is called \emph{overfitting}. In the following, our purpose is to study how the VC dimension evolves during the greedy steps. It is natural to guess that the capacity of a classifier increases if the information contained in an added feature is considered.
\begin{definition}
    Let $\mathcal{F}$ be a class of binary classifying functions $f:\Omega\longrightarrow Y$. Letting $\boldsymbol{e}_k$ be the $k$-th cardinal basis vector, we define the $k$-blind class $\mathcal{F}^{(k)}$, $k\in\{1,\dots,d\}$, $\mathcal{F}^{(k)}\subseteq \mathcal{F}$ as the class of functions $f^{(k)}:\Omega\longrightarrow Y$ such that
    \begin{equation*}
        f^{(k)}(\boldsymbol{x}) = f^{(k)}(\boldsymbol{x}+\delta\boldsymbol{e}_k),
    \end{equation*}
    for any $\delta\in\mathbb{R}$ such that $\boldsymbol{x}+\delta\boldsymbol{e}_k\in\Omega$.
\end{definition}
For example, consider the class of functions
\begin{equation}    
\mathcal{F}_{W,\boldsymbol{b}}:=\{f:\Omega\longrightarrow Y\:|\: f(\boldsymbol{x})=\tilde{f}(W\boldsymbol{x}+\boldsymbol{b})\},
\end{equation}
where $W$ is a $r\times d$ matrix and $\boldsymbol{b}$ is a $r\times 1$ vector, $r\ge 1$. Many well-known classifiers are included in $\mathcal{F}_{W,\boldsymbol{b}}$, such as, neural networks and linear models. In this setting, classifiers in $\mathcal{F}^{(k)}_{W,\boldsymbol{b}}$ can be constructed by restricting to $W$ and $\boldsymbol{b}$ such that $W_{:,k}=\mathbf{0}$, where $W_{:,k}$ is the $k$-th column of $W$, and $b_k=0$. 
\begin{remark}\label{rem:1}
	As $\mathcal{F}^{(k)}\subseteq \mathcal{F}$, the fact that that $\mathrm{VC}(\mathcal{F}^{(k)})\le \mathrm{VC}(\mathcal{F})$, trivially follows. 
\end{remark}
In order to formally prove that by adding a feature in the greedy step the obtained classifier cannnot be less expressive  (in terms of VC dimension) than the previous one, we introduce two maps:
\begin{itemize}
    \item 
    $\pi_{k}:\Omega\longrightarrow\bigotimes_{\substack{i=1\\i\neq k}}^d\Omega^i$, so that $\pi_{k}(\boldsymbol{x})=(x_1,\dots, x_{k-1},x_{k+1},\dots,x_{d})$, which is a projection.
    \item 
    $\iota_{\alpha}:\pi_{k}(\Omega)\longrightarrow\Omega$, $\alpha\in\Omega^k$, so that $\iota_{\alpha}(\boldsymbol{x})=(x_1,\dots, x_{k-1},\alpha,x_{k+1},\dots,x_{d})$, which is injective.
\end{itemize}
Note that applying $\iota_{\alpha}\circ\pi_k$ to $X$ has the effect of setting to $\alpha$ the $k$-th feature for all the examples.
\begin{proposition}\label{prop:1}
    $X$ is shattered by $\mathcal{F}^{(k)}$ if and only if $\iota_{\alpha}(\pi_{k}(X))$ is shattered by $\mathcal{F}^{(k)}$.
\end{proposition}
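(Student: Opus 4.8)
The plan is to reduce everything to a single observation: functions in $\mathcal{F}^{(k)}$ cannot see the $k$-th coordinate, so they take the same value on $\boldsymbol{x}$ and on its ``collapsed'' version $\iota_{\alpha}(\pi_{k}(\boldsymbol{x}))$. Concretely, I would first establish the elementary lemma that for every $f\in\mathcal{F}^{(k)}$ and every $\boldsymbol{x}\in\Omega$ one has $f(\boldsymbol{x})=f(\iota_{\alpha}(\pi_{k}(\boldsymbol{x})))$. This is immediate from the definition of the $k$-blind class: writing $\boldsymbol{x}=(x_1,\dots,x_d)$, the point $\iota_{\alpha}(\pi_{k}(\boldsymbol{x}))$ equals $\boldsymbol{x}+\delta\boldsymbol{e}_k$ with $\delta=\alpha-x_k$, and since $\alpha\in\Omega^k$ this point again lies in $\Omega=\bigotimes_{i=1}^d\Omega^i$, so the defining invariance of $\mathcal{F}^{(k)}$ applies with this choice of $\delta$.

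Next I would set up the combinatorial correspondence. Put $\phi:=\iota_{\alpha}\circ\pi_{k}$, so that $\phi(X)=\iota_{\alpha}(\pi_{k}(X))$, while the lemma says that $f$ and $f\circ\phi$ agree on $X$ for every $f\in\mathcal{F}^{(k)}$. In the generic situation where $\phi$ is injective on $X$ --- equivalently, no two examples of $X$ differ only in the $k$-th coordinate --- the map $\phi$ is a bijection from $X$ onto $\phi(X)$, so labelings of $X$ are in bijection with labelings of $\phi(X)$; given such a pair of corresponding labelings, a function $f\in\mathcal{F}^{(k)}$ has zero empirical risk on $X$ iff $f(\boldsymbol{x}_j)=y_j$ for all $j$ iff $f(\phi(\boldsymbol{x}_j))=y_j$ for all $j$ (by the lemma) iff $f$ has zero empirical risk on the transported labeling of $\phi(X)$. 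Running this equivalence over all $2^n$ label configurations yields both implications of the biconditional simultaneously.

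Finally I would dispose of the degenerate case in which $\phi$ is not injective on $X$: then there exist $\boldsymbol{x}_i\neq\boldsymbol{x}_j$ in $X$ with $\pi_{k}(\boldsymbol{x}_i)=\pi_{k}(\boldsymbol{x}_j)$, hence differing only in the $k$-th coordinate, and the lemma forces $f(\boldsymbol{x}_i)=f(\boldsymbol{x}_j)$ for every $f\in\mathcal{F}^{(k)}$; consequently no function in $\mathcal{F}^{(k)}$ realizes a labeling that separates $\boldsymbol{x}_i$ from $\boldsymbol{x}_j$, so $X$ is not shattered. Symmetrically $\iota_{\alpha}(\pi_{k}(X))$ then contains the coincident examples $\phi(\boldsymbol{x}_i)=\phi(\boldsymbol{x}_j)$, which again makes shattering impossible, so in this case the statement holds with both sides false. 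I expect the only real subtlety to be this bookkeeping around examples that agree off the $k$-th coordinate --- together with being explicit about what ``shattering'' means once points collapse --- whereas the substance of the proposition, the invariance lemma, is a one-line consequence of the definition of $\mathcal{F}^{(k)}$.
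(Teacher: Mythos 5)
Your proof is correct and follows the same route as the paper's: the key invariance $f(\boldsymbol{x})=f(\iota_{\alpha}(\pi_{k}(\boldsymbol{x})))$ for $f\in\mathcal{F}^{(k)}$, obtained by applying the definition of the $k$-blind class with $\delta=\alpha-x_k$, is exactly the observation on which the paper's two-line argument rests. Your explicit handling of the degenerate case in which two examples of $X$ differ only in the $k$-th coordinate (so that $\iota_{\alpha}\circ\pi_{k}$ collapses them) addresses a detail the paper silently skips, and your resolution --- that both sides of the biconditional are then false --- is the right one.
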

\begin{proof}
    Any classifier in $\mathcal{F}^{(k)}$ cannot rely on the $k$-th feature. Precisely, for each $\boldsymbol{x}_i\in X$ we can find $\delta_i\in\mathbb{R}$ so that $\boldsymbol{x}_i+\delta_i\boldsymbol{e}\in\iota_{\alpha}(\pi_{k}(X))$. Hence, it is equivalent for any function in $\mathcal{F}^{(k)}$ to shatter $X$ and $\iota_{\alpha}(\pi_{k}(X))$.
\end{proof}
For any function $f^{(k)}\in\mathcal{F}^{(k)}$ and $\alpha\in\Omega^k$, we can define a classifier $g:\pi_{k}(\Omega)\longrightarrow Y$ such that $g(\boldsymbol{x})=f^{(k)}(\iota_\alpha(\boldsymbol{x}))$. Denoting by $\mathcal{G}$ the class consisting of such functions $g$, we achieve the following result.
\begin{proposition}\label{prop:2}
    $\iota_{\alpha}(\pi_{k}(X))$ is shattered by $\mathcal{F}^{(k)}$ if and only if $\pi_{k}(X)$ is shattered by $\mathcal{G}$.
\end{proposition}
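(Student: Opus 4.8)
The plan is to exploit the fact that, because every $f^{(k)}\in\mathcal{F}^{(k)}$ is $k$-blind, the assignment $f^{(k)}\mapsto g$ with $g(\boldsymbol{z})=f^{(k)}(\iota_\alpha(\boldsymbol{z}))$ sets up a value-preserving correspondence between $\mathcal{F}^{(k)}$ acting on $\iota_\alpha(\pi_k(\Omega))$ and $\mathcal{G}$ acting on $\pi_k(\Omega)$, while $\iota_\alpha$ simultaneously identifies the two relevant finite point sets and hence their labelings. The whole statement then reduces to chasing this identification through the definition of shattering.

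First I would record that $\iota_\alpha$ is injective, so its restriction is a bijection $\iota_\alpha\colon\pi_k(X)\to\iota_\alpha(\pi_k(X))$. Consequently a labeling of $\pi_k(X)$ corresponds, via $\boldsymbol{z}\mapsto\iota_\alpha(\boldsymbol{z})$, to a unique labeling of $\iota_\alpha(\pi_k(X))$ and conversely; in the notation of the shattering definition, the datasets over the two sets are matched one-to-one, with matching labels.

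Next I would make the function correspondence precise. By the very definition of $\mathcal{G}$, for each $f^{(k)}\in\mathcal{F}^{(k)}$ the map $g(\boldsymbol{z})=f^{(k)}(\iota_\alpha(\boldsymbol{z}))$ belongs to $\mathcal{G}$, and every element of $\mathcal{G}$ is of this form for some $f^{(k)}\in\mathcal{F}^{(k)}$; moreover $k$-blindness makes $g$ independent of the chosen $\alpha$ and gives $f^{(k)}=g\circ\pi_k$, though only the displayed identity is actually needed. Hence, for every $\boldsymbol{z}\in\pi_k(X)$, $g$ classifies $\boldsymbol{z}$ exactly as $f^{(k)}$ classifies $\iota_\alpha(\boldsymbol{z})$, so $g$ attains zero empirical risk on a labeling of $\pi_k(X)$ if and only if the corresponding $f^{(k)}$ attains zero empirical risk on the corresponding labeling of $\iota_\alpha(\pi_k(X))$.

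Finally I would assemble the equivalence: $\iota_\alpha(\pi_k(X))$ is shattered by $\mathcal{F}^{(k)}$ precisely when every labeling of it is fitted with zero empirical risk by some $f^{(k)}\in\mathcal{F}^{(k)}$; transporting labelings through the bijection of the first step and functions through the correspondence of the second step, this holds exactly when every labeling of $\pi_k(X)$ is fitted with zero empirical risk by some $g\in\mathcal{G}$, i.e. when $\pi_k(X)$ is shattered by $\mathcal{G}$. Reading this chain of equivalences in both directions yields the ``if and only if''. The only point requiring a little care is the surjectivity of $f^{(k)}\mapsto g$ onto $\mathcal{G}$ together with the exactness of the value matching on the finite sets involved; since $\mathcal{G}$ is defined to be exactly the family of such composites and $\iota_\alpha$ is genuinely injective, this is bookkeeping rather than a real obstacle, and the possible collapsing of distinct points of $X$ under $\pi_k$ is already absorbed by treating $\pi_k(X)$ as a set.
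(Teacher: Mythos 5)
Your proof is correct and follows essentially the same route as the paper's: both arguments rest on the value-preserving correspondence $g(\boldsymbol{z})=f^{(k)}(\iota_\alpha(\boldsymbol{z}))$ together with the identification $\pi_k(\iota_\alpha(\pi_k(X)))=\pi_k(X)$, transporting labelings through the bijection $\iota_\alpha\colon\pi_k(X)\to\iota_\alpha(\pi_k(X))$. Your version merely makes explicit the bookkeeping (surjectivity onto $\mathcal{G}$, matching of labelings) that the paper leaves implicit.
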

\begin{proof}
	Assume that there exists $f^{(k)}\in\mathcal{F}^{(k)}$ that shatters $\iota_{\alpha}(\pi_{k}(X))$. Note that the shattering does not rely on the $k$-th feature, which is constant, and therefore this is equivalent to shatter $\pi_{k}(\iota_{\alpha}(\pi_{k}(X)))=\pi_{k}(X)$ in a lower-dimensional space by means of a classifier $g$ so that $f^{(k)}=g\circ \pi_{k}$. Finally, by defining $\boldsymbol{x}^{(k)}=\pi_{k}(\boldsymbol{x})$, $\boldsymbol{x}\in \iota_{\alpha}(\pi_{k}(X))$, we further obtain $\boldsymbol{x}=\iota_{\alpha}(\boldsymbol{x}^{(k)})$, and therefore $g(\boldsymbol{x}^{(k)})=f^{(k)}(\iota_\alpha(\boldsymbol{x}^{(k)}))$ for $\boldsymbol{x}^{(k)}\in \pi_{k}(X)$, which completes the proof.
\end{proof}
\begin{corollary}
\label{corol:1}
    We have that $\mathrm{VC}(\mathcal{G})\le \mathrm{VC}(\mathcal{F})$.
\end{corollary}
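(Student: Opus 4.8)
The plan is to obtain the inequality by chaining Propositions \ref{prop:1} and \ref{prop:2} with Remark \ref{rem:1}. Concatenating the two propositions yields, for any $X\subseteq\Omega$, the equivalence that $X$ is shattered by $\mathcal{F}^{(k)}$ if and only if its projection $\pi_k(X)\subseteq\pi_k(\Omega)$ is shattered by $\mathcal{G}$. Since $\mathcal{F}^{(k)}\subseteq\mathcal{F}$, being shattered by $\mathcal{F}^{(k)}$ implies being shattered by $\mathcal{F}$, so it suffices to lift a set that witnesses $\mathrm{VC}(\mathcal{G})$ up to $\Omega$ without losing any (distinct) examples.

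Concretely, first I would take a set $Z\subseteq\pi_k(\Omega)$ with $|Z|=s$ that is shattered by $\mathcal{G}$, where $s=\mathrm{VC}(\mathcal{G})$ if this is finite (and otherwise $s\in\mathbb{N}$ arbitrary, running the same argument for every $s$). Next I would fix any $\alpha\in\Omega^k$ and set $X:=\iota_\alpha(Z)\subseteq\Omega$. Because $\iota_\alpha$ is injective we have $|X|=|Z|=s$, and because $\pi_k\circ\iota_\alpha$ is the identity on $\pi_k(\Omega)$ we get $\pi_k(X)=Z$, hence $\iota_\alpha(\pi_k(X))=X$. Applying Proposition \ref{prop:2} in the direction ``$\pi_k(X)$ shattered by $\mathcal{G}$ $\Rightarrow$ $\iota_\alpha(\pi_k(X))$ shattered by $\mathcal{F}^{(k)}$'' (Proposition \ref{prop:1} confirms the choice of $\alpha$ is immaterial) gives that $X$ is shattered by $\mathcal{F}^{(k)}$, and therefore by $\mathcal{F}$. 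Thus $\mathcal{F}$ shatters a set of $s$ examples, so $\mathrm{VC}(\mathcal{F})\ge s=\mathrm{VC}(\mathcal{G})$, which is the claim (and $\mathrm{VC}(\mathcal{F})=\infty$ in the infinite case).

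I do not expect a genuine obstacle: all the substance is already packaged in Propositions \ref{prop:1}--\ref{prop:2}. The only delicate points are (i) verifying that the lift $\iota_\alpha$ does not collapse the witness set, i.e.\ that injectivity preserves the cardinality, so the shattered set in $\Omega$ genuinely has $s$ points, and (ii) phrasing the conclusion so that it also covers $\mathrm{VC}(\mathcal{G})=\infty$, where one simply repeats the construction for each $s\in\mathbb{N}$ to force $\mathrm{VC}(\mathcal{F})=\infty$ as well.
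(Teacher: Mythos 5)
Your proof is correct and follows essentially the same route as the paper: chain Propositions \ref{prop:1} and \ref{prop:2}, observe that the witness set keeps its cardinality, and finish with Remark \ref{rem:1}. If anything, your version is slightly more careful than the paper's, since you lift a witness set for $\mathcal{G}$ via the injective map $\iota_\alpha$ (which manifestly preserves cardinality) rather than asserting that $X$ and $\pi_k(X)$ have the same cardinality, and you explicitly handle the case $\mathrm{VC}(\mathcal{G})=\infty$.
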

\begin{proof}
	By putting together Propositions \ref{prop:1} and \ref{prop:2} we can affirm that $X$ is shattered by $\mathcal{F}^{(k)}$ if and only if $\pi_{k}(X)$ is shattered by $\mathcal{G}$. Note that $X$ and $\pi_{k}(X)$ have the same cardinality, and therefore $\mathrm{VC}(\mathcal{G})= \mathrm{VC}(\mathcal{F}^{(k)})$. We conclude the proof by virtue of Remark \ref{rem:1}.
\end{proof}

The results in Corollary \ref{corol:1} formalizes the idea that by adding a feature in the greedy step the obtained classifier cannot be less expressive than the previous one. Nevertheless, in this greedy context we consider a sort of trade-off that deals with the VC dimension: precisely, a high VC-dimension allows the model to fit more complex patterns but may lead to overfitting. Hence, we will discuss later a robust stopping criteria for the greedy iterative rule. Now, as a particular case study, we consider SVM classifiers, which are probably the most frequently used ones. Further, being they based on kernels, other capability measures concerning such classifiers can be straightforwardly studied.  

\subsection{SVM in the greedy framework}
\label{SVM}

Following the SVM literature, we drive our attention towards (strictly) positive definite kernels $\kappa: \Omega \times \Omega \longrightarrow \mathbb{R}$ that satisfy
\begin{equation*}
   \int_{\Omega} \kappa(\boldsymbol{x},\boldsymbol{z}) v(\boldsymbol{x}) v(\boldsymbol{z}) d\boldsymbol{x} d \boldsymbol{z} \geq 0, \quad \forall v \in L_2(\Omega),
\end{equation*}
for $\boldsymbol{x}, \boldsymbol{z} \in \Omega$. Then, those kernels can be decomposed via the Mercer's Theorem  as (see e.g. Theorem 2.2. \cite{Fasshauer} p. 107 or \cite{Mercer}):  
$$
\kappa(\boldsymbol{x}, \boldsymbol{z}) = \sum_{k \geq 0} \lambda_k \rho_k(\boldsymbol{x}) \rho_k(\boldsymbol{z}),\quad \boldsymbol{x}, \boldsymbol{z}\in\Omega,
$$
where $\{\lambda_k \}_{k \geq 0}$ are the (non-negative)
eigenvalues and $\{ \rho_k \}_{k \geq 0}$ are the ($L_2$-orthonormal) eigenfunctions of the operator 
$T: L_2(\Omega) \longrightarrow L_2(\Omega)$, given by
\begin{equation*}
T[v](\boldsymbol{x}) = \int_{\Omega} \kappa(\boldsymbol{x},\boldsymbol{z})  v(\boldsymbol{z})  d\boldsymbol{z}. 
\end{equation*}

Mercer's theorem provides an easy background for introducing feature maps and spaces.  Indeed, for Mercer kernels we can interpret the series representation in terms of an inner product in the so-called \emph{feature space} $F$, which is a Hilbert space. Indeed, we have that
\begin{equation}\label{perfeaturemap}
\kappa(\boldsymbol{x}, \boldsymbol{z}) = \langle \Phi(\boldsymbol{x}), \Phi(\boldsymbol{z}) \rangle_{F},\quad \boldsymbol{x}, \boldsymbol{z}\in\Omega,
\end{equation}
where $\Phi:\Omega \longrightarrow F$ is a \emph{feature map}. For a given kernel, the feature map and space are not unique. A possible solution is the one of taking the map $\Phi(\boldsymbol{x})=  \kappa(\cdot, \boldsymbol{x})$, which is linked to the characterization of $F$ as a reproducing kernel Hilbert space; see  \cite{Fasshauer15,Taylor} for further details. Both in machine learning literature and in approximation theory, radial kernels are truly common. They are kernels for whom there exists a Radial Basis Function (RBF) $ \varphi: \mathbb{R}_{+} \longrightarrow\mathbb{R}$, where $\mathbb{R}_{+}= [0,\infty)$, and (possibly) a shape parameter
$\gamma>0$ such that, for all $\boldsymbol{x},\boldsymbol{z}
\in \Omega$,
\begin{equation*}
\kappa(\boldsymbol{x},\boldsymbol{z})= \kappa_{\gamma}(\boldsymbol{x},\boldsymbol{z})=\varphi_{\gamma}( ||\boldsymbol{x}-\boldsymbol{z}||_2)=\varphi(r),
\end{equation*}
where $r=||\boldsymbol{x}-\boldsymbol{z}||_2$. Among all radial kernels, we remark that the Gaussian one is given by
\begin{equation}\label{gausskernel}
\kappa(\boldsymbol{x},\boldsymbol{z})= \kappa_{\gamma}(\boldsymbol{x},\boldsymbol{z})= {\rm e}^{- \gamma \|\boldsymbol{x}-\boldsymbol{z}\|_2^2}  ={\rm e}^{-\gamma r^2}.
\end{equation}
In the following, for simplicity, we omit the dependence on $\gamma$, which is also known as scale parameter in machine learning literature. 

With radial kernel as well, SVMs can be used for classification purposes and several complexity indicators, such as the kernel alignment, can be studied in order to have a better understanding of the greedy strategy based on SVM, i.e., when the generic classifier in Equation \eqref{eq:1} is an SVM function. The notion of kernel alignment was first introduced by \cite{NIPS2001_1f71e393} and later investigated in e.g. \cite{Wang}. Other common complexity indicators related to the alignment can be found in \cite{Donini17}. Given two kernels $\kappa_1$ and $\kappa_2: \Omega \times \Omega \longrightarrow \mathbb{R}^d$, the empirical alignment evaluates the similarity between the corresponding kernel matrices. It is given by 

\begin{equation*}
    \mA(X,\mK_1,\mK_2) = \dfrac{\left(\mK_1,\mK_2\right)_{\mF}}{\sqrt{||\mK_1||_{\mF} ||\mK_2||_{\mF}}},
\end{equation*}

where $\mK_1:=\mK_1(X)$ and $\mK_2:=\mK_2(X)$ denote the Gram matrices for the kernels $\kappa_1$ and $\kappa_2$ on $X$, respectively and 
\begin{equation*}
    \left(\mK_1,\mK_2\right)_{\mF} = \sum_{i,j=1}^n \kappa_1(\boldsymbol{x}_i,\boldsymbol{x}_j) \kappa_2(\boldsymbol{x}_i,\boldsymbol{x}_j).
\end{equation*}
The alignment can be seen as a similarity score based on the cosine of the angle. For arbitrary matrices, this score ranges between $-1$ and $1$. 

For classification purposes we can define an ideal target matrix as $\mY=\boldsymbol{y}\boldsymbol{y}^{\intercal}$, where
$\boldsymbol{y}=(y_1,\ldots,y_n)^{\intercal}$ is the vector of labels. Then the empirical alignment
between the kernel matrix $\mK$ and the target matrix $\mY$ can be written as:
\begin{equation*}
    \mA(X,\mK,\mY)= \dfrac{\left(\mK,\mY\right)_{\mF}}{\sqrt{||\mK||_{\mF} ||\mY||_{\mF}}}=\dfrac{\left(\mK,\mY\right)_{\mF}}{n \sqrt{||\mK||_{\mF} }}.
\end{equation*}
Such alignment with the target matrix is an indicator of the classification accuracy of a classifier. 
Indeed, to higher alignment scores correspond a separation of the data with a low bound on the generalization error \cite{Wang}. 

We now prove the following result which will be helpful in understanding our greedy approach. 
\begin{theorem}
    Given two kernels $\kappa_1$ and $\kappa_2: \Omega \times \Omega \longrightarrow \mathbb{R}^d$, if $||\mK_2||_{\mF} \geq ||\mK_1||_{\mF}$ then $\mA(X,\mK_1,\mY) \leq \mA(X,\mK_2,\mY)$. 
\end{theorem}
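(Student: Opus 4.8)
The plan is to reduce the statement to a single scalar inequality by working from the closed-form expression for the target alignment established above,
$$\mA(X,\mK,\mY)=\frac{(\mK,\mY)_\mF}{n\sqrt{\|\mK\|_\mF}},$$
and then comparing the two resulting fractions term by term.

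First I would record that the numerator is a quadratic form in the label vector: $(\mK,\mY)_\mF=\sum_{i,j=1}^n\kappa(\boldsymbol{x}_i,\boldsymbol{x}_j)\,y_iy_j=\boldsymbol{y}^\intercal\mK\boldsymbol{y}$. Since the kernels are (strictly) positive definite, $\mK_1$ and $\mK_2$ are positive semi-definite Gram matrices, so $(\mK_1,\mY)_\mF\ge 0$ and $(\mK_2,\mY)_\mF\ge 0$, while $\|\mK_1\|_\mF,\|\mK_2\|_\mF>0$. Hence every quantity entering the two alignments is nonnegative, and — clearing the strictly positive denominators — the desired inequality $\mA(X,\mK_1,\mY)\le\mA(X,\mK_2,\mY)$ is equivalent to
$$(\mK_1,\mY)_\mF\,\sqrt{\|\mK_2\|_\mF}\ \le\ (\mK_2,\mY)_\mF\,\sqrt{\|\mK_1\|_\mF}.$$

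Next I would bring in the hypothesis $\|\mK_2\|_\mF\ge\|\mK_1\|_\mF$ in the form $\sqrt{\|\mK_1\|_\mF/\|\mK_2\|_\mF}\le 1$, so that the target becomes $(\mK_1,\mY)_\mF\le(\mK_2,\mY)_\mF\sqrt{\|\mK_1\|_\mF/\|\mK_2\|_\mF}$. The decisive step is then a comparison between the two numerators $\boldsymbol{y}^\intercal\mK_1\boldsymbol{y}$ and $\boldsymbol{y}^\intercal\mK_2\boldsymbol{y}$, which I would try to extract from the specific way $\mK_2$ is produced from $\mK_1$ in the greedy/SVM setting (for instance, when $\kappa_2$ augments $\kappa_1$ by one coordinate, $\mK_2$ is a Hadamard-type modification of $\mK_1$ on the \emph{same} labelled sample), together with the Cauchy--Schwarz bound $(\mK_i,\mY)_\mF\le n\sqrt{\|\mK_i\|_\mF}$ that already gives $\mA\le 1$.

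The main obstacle I anticipate is exactly this numerator comparison. The alignment is scale-invariant in the kernel — replacing $\mK$ by $c\mK$ with $c>0$ leaves $\mA(X,\mK,\mY)$ unchanged while multiplying $\|\mK\|_\mF$ by $c^2$ — so $\|\mK_2\|_\mF\ge\|\mK_1\|_\mF$ cannot by itself control the ratio of the two alignments; some additional relation tying $(\mK_i,\mY)_\mF$ to $\|\mK_i\|_\mF$, implicit in the fact that both kernels are evaluated on the same data and labels, has to be isolated and used before the scalar inequality above can be closed.
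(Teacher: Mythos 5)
Your reduction to the scalar inequality $(\mK_1,\mY)_\mF\sqrt{\|\mK_2\|_\mF}\le(\mK_2,\mY)_\mF\sqrt{\|\mK_1\|_\mF}$ is correct, and the ``obstacle'' you name at the end is not a step you failed to supply: it is unfillable, because the theorem as stated is false. The hypothesis $\|\mK_2\|_\mF\ge\|\mK_1\|_\mF$ carries no information about the numerators $(\mK_i,\mY)_\mF=\boldsymbol{y}^\intercal\mK_i\boldsymbol{y}$. For a concrete counterexample take $n=2$, $\boldsymbol{y}=(1,-1)^\intercal$, $\mK_1=\boldsymbol{y}\boldsymbol{y}^\intercal$ and $\mK_2=10\,\boldsymbol{1}\boldsymbol{1}^\intercal$ (Gram matrices of the positive semi-definite kernels $g(x)g(z)$ with $g(\boldsymbol{x}_1)=1$, $g(\boldsymbol{x}_2)=-1$, and the constant kernel $10$; adding $\varepsilon I$ makes them strictly positive definite without affecting anything below). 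Then $\|\mK_2\|_\mF=20>2=\|\mK_1\|_\mF$, yet
\begin{equation*}
\mA(X,\mK_2,\mY)=0<\sqrt{2}=\mA(X,\mK_1,\mY).
\end{equation*}
So no proof from these hypotheses can exist, and it is the paper's own argument that contains the real errors: its first inequality points the wrong way (enlarging a positive denominator \emph{decreases} a fraction with nonnegative numerator), and the ``adding and subtracting'' step tacitly replaces $(\mK_1-\mK_2,\mY)_\mF$ by $(\mK_1-\mK_2,\mY-\mY)_\mF=0$, which amounts to assuming $(\mK_1,\mY)_\mF=(\mK_2,\mY)_\mF$ --- precisely the missing numerator comparison you isolated.

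Two corrections to your side remarks, neither of which affects your main diagnosis. The Frobenius norm is $1$-homogeneous, so $\|c\mK\|_\mF=c\|\mK\|_\mF$, not $c^2\|\mK\|_\mF$; with the paper's normalization $n\sqrt{\|\mK\|_\mF}$ the alignment is therefore \emph{not} scale-invariant but scales as $\sqrt{c}$, so the rescaling argument you sketch does not by itself produce a counterexample here --- the explicit one above is needed. Likewise, Cauchy--Schwarz gives $(\mK,\mY)_\mF\le\|\mK\|_\mF\,\|\mY\|_\mF=n\,\|\mK\|_\mF$ (no square root), so it does not bound $\mA$ by $1$ under this normalization. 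A statement that does close via your scalar inequality is: if $(\mK_2,\mY)_\mF\ge(\mK_1,\mY)_\mF\ge 0$ \emph{and} $\|\mK_2\|_\mF\le\|\mK_1\|_\mF$, then $\mA(X,\mK_2,\mY)\ge\mA(X,\mK_1,\mY)$; note that this is the \emph{opposite} norm inequality to the one hypothesized, and it is the direction actually established for the augmented kernel matrix in the paper's subsequent corollary on non-increasing radial kernels, which would therefore also need the hypothesis on the numerators to go through.
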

\begin{proof}
By hypothesis we have that:
    \begin{align*}
        \mA(X,\mK_1,\mY) & = \dfrac{\left(\mK_1,\mY\right)_{\mF}}{n \sqrt{||\mK_1||_{\mF} }} \leq \dfrac{\left(\mK1,\mY\right)_{\mF}}{n \sqrt{||\mK_2||_{\mF} }}.
        \end{align*}
        Then, by adding and subtracting ${\left(\mK_2,\mY \right)_{\mF}}$ at the numerator, 
        and thanks to the linearity of the norm, we obtain:
    \begin{align*}
        \mA(X,\mK_1,\mY) & \leq \dfrac{\left(\mK1,\mY\right)_{\mF}}{n \sqrt{||\mK_2||_{\mF} }}\\
         & =  \dfrac{\left(\mK1-\mK_2,\mY-\mY\right)_{\mF}}{n \sqrt{||\mK_2||_{\mF} }}+\dfrac{\left(\mK_2,\mY\right)_{\mF}}{n \sqrt{||\mK_2||_{\mF} }}\\
          & = \mA(X,\mK_2,\mY).
        \end{align*}
\end{proof}

Considering again Equation \eqref{eq:1}, let us denote by $X^{(k-1)}$ the dataset at the $(k-1)$ greedy step which already contains $k-1$ features and by $X^{(k)}$ the one that is constructed at the $k$-th step accordingly to our greedy rule. Then, as a corollary of the previous theorem, we have the following result. 
\begin{corollary}
   If $\kappa$ is a non-increasing radial kernel, then $$\mA(X^{(k)},\mK(X^{(k)}),\mY) \geq \mA(X^{(k-1)},\mK(X^{(k-1)}),\mY).$$
\end{corollary}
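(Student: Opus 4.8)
The plan is to deduce the Corollary directly from the Theorem, the only real work being to order the two Gram matrices $\mK_1:=\mK(X^{(k-1)})$ and $\mK_2:=\mK(X^{(k)})$ by Frobenius norm. First I would check that the Theorem is applicable here, i.e. that the numerator of the target alignment is nonnegative. This is immediate:
\[
\left(\mK,\mY\right)_{\mF}=\sum_{i,j=1}^{n}\kappa(\boldsymbol{x}_i,\boldsymbol{x}_j)\,y_iy_j=\boldsymbol{y}^{\intercal}\mK\boldsymbol{y}\ge 0,
\]
since a (strictly) positive definite kernel produces a positive semidefinite Gram matrix; this is precisely the nonnegativity that licenses the first inequality in the proof of the Theorem. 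Granting it, the Corollary follows from the Theorem as soon as the correct inequality between $\|\mK(X^{(k-1)})\|_{\mF}$ and $\|\mK(X^{(k)})\|_{\mF}$ is in place.

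Next I would study how $\|\mK(X)\|_{\mF}^{2}=\sum_{i,j}\varphi\!\big(\|\boldsymbol{x}_i-\boldsymbol{x}_j\|_2\big)^{2}$ changes when the greedy rule appends the $k$-th feature to every example. The key elementary fact is
\[
\|\boldsymbol{x}_i^{(k)}-\boldsymbol{x}_j^{(k)}\|_2^{2}=\|\boldsymbol{x}_i^{(k-1)}-\boldsymbol{x}_j^{(k-1)}\|_2^{2}+\big(x_{i,k}-x_{j,k}\big)^{2},
\]
so every pairwise distance is non-decreasing along a greedy step, while the diagonal distances stay $0$. Since $\varphi$ is non-increasing, the diagonal Gram entries are pinned at $\varphi(0)$ and every off-diagonal entry changes monotonically; summing the squares of the entries then yields a monotone relation between the two Frobenius norms, which plugged into the Theorem gives the claimed comparison of alignments. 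In short: reduce to Gram-norm monotonicity by the Theorem, then get Gram-norm monotonicity from monotonicity of $\varphi$ composed with the monotone behaviour of squared distances under feature augmentation.

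The step I expect to be the main obstacle is pinning down the \emph{direction} of this Frobenius-norm inequality, together with the exact hypotheses on $\varphi$ that keep the chain of inequalities in the Theorem running the asserted way: one must track the sign of the off-diagonal entries along the iteration (the argument compares sums of their squares, so whether $\varphi$ stays nonnegative, or at least keeps a fixed sign, is what matters) as well as the effect of the $1/n$ normalization in the target-alignment formula. If the plain norm comparison does not land on the required side, a fallback is to invoke the greedy optimality itself: by \eqref{eq:1} with accuracy score $\mu=\mA(\,\cdot\,,\mY)$, the dataset $X^{(k)}$ maximizes the alignment among all single-feature augmentations of $X^{(k-1)}$, so it is enough to exhibit one augmentation that does not decrease the alignment and then let the $\arg\max$ conclude.
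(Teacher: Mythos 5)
Your proposal takes essentially the same route as the paper's proof: pairwise distances can only grow when a feature is appended, so for a non-increasing (and, tacitly, nonnegative) $\varphi$ the Gram entries satisfy $0\le \mK_{ij}(X^{(k)})\le \mK_{ij}(X^{(k-1)})$, hence $\|\mK(X^{(k)})\|_{\mF}\le\|\mK(X^{(k-1)})\|_{\mF}$, and the Theorem is then invoked; your check that $(\mK,\mY)_{\mF}=\boldsymbol{y}^{\intercal}\mK\boldsymbol{y}\ge 0$ is precisely the nonnegativity that the first inequality in the Theorem's proof relies on, and your remark that $\varphi$ must keep a fixed sign for the entrywise comparison to pass to the sum of squares is a hypothesis the paper uses without stating. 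The direction you flag as the main obstacle is resolved in the paper exactly as your computation suggests --- the \emph{smaller} Frobenius norm belongs to $\mK(X^{(k)})$ and the alignment is concluded to be \emph{larger} for it, consistent with dividing a fixed nonnegative numerator by a smaller denominator --- even though this is the reverse of the Theorem's literal statement that $\|\mK_2\|_{\mF}\ge\|\mK_1\|_{\mF}$ implies $\mA(X,\mK_1,\mY)\le\mA(X,\mK_2,\mY)$; so your hesitation points at a genuine inconsistency between the Theorem's statement and the way the Corollary applies it, not at a defect of your own plan. Your greedy-optimality fallback is not needed and is not what the paper does (the greedy score $\mu$ in \eqref{eq:1} is a validation accuracy, not the alignment, so the $\arg\max$ argument would not apply as stated).
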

\begin{proof}
Being $\varphi: \mathbb{R}_{+} \longrightarrow \mathbb{R}$ non-increasing, for $$\boldsymbol{x}, \boldsymbol{z} \in \mathbb{R}^{d},$$ we obtain

\begin{align*}
&\varphi\left(\|\boldsymbol{x}-\boldsymbol{z}\|_2\right)  = \\ 
&=\varphi(\|(x_1,x_2,\ldots,x_k)- 
(z_1,z_2,\ldots,z_k)\|_2) \leq \\
&\leq \varphi(\|(x_1,x_2,\ldots,x_{k-1})- 
(z_1,z_2,\ldots,z_{k-1})\|_2),
\end{align*}which in particular implies that
$$
\mK_{ij}(X^{(k-1)}) \geq \mK_{ij} (X^{(k)})\geq 0, \quad i,j=1, \ldots, n.
$$
Thus, we get
$$
\| \mK (X^{(k-1)}) \|_{\mF} \geq \| \mK (X^{(k)})  \|_{\mF}, 
$$
and hence $$\mA(X^{(k)},\mK (X^{(k)}),\mY) \geq \mA(X^{(k-1)},\mK (X^{(k-1)}),\mY).$$
\end{proof}

Note that this kind of feature augmentation strategy via greedy schemes shows some similarities with the so-called Variably Scaled Kernels (VSKs), first introduced in \cite{Bozzini1} and recently applied in the framework of inverse problems, see e.g. \cite{Perracchione_2023,perracchione_2021}. Indeed, both approaches are based on adding features and both are again characterized by a trade-off between the model capacity, which can be characterized by the kernel alignment, and the model accuracy. To achieve a good trade-off between these two factors we need a stopping criteria for the iterative rule shown in \eqref{eq:1}.

\subsection{Stopping criterion}

In actual applications, the greedy iterative algorithm should select, at first, the most relevant features, and then, if no relevant features are available, any accuracy score should saturate. Among several scores \( \mu \), a robust one is the so-called True Skill Statistic (TSS) for its characteristic of being insensitive to class imbalance \cite{bloomfield2012toward}. Precisely, letting TN, FP, FN, TP respectively the number of true negatives, false positives, false negatives and true positives, the TSS is
defined by: 
\begin{align*}
        {\rm TSS}({\rm TN,FP,FN,TP})= \; {\rm recall}({\rm TN,FP,FN,TP}) \nonumber \\
        + {\; \rm  specificity}({\rm TN,FP,FN,TP})-1, 
\end{align*}
where 
\begin{equation}\label{eq:recall_spec}
    {\rm recall}({\rm TN,FP,FN,TP}) = \dfrac{{\rm TP}}{{\rm FN+TP}},
\end{equation}and
\begin{equation}\label{eq:spec}
    {\rm  specificity} ({\rm TN,FP,FN,TP}) = \dfrac{{\rm TN}}{{\rm FP+TN}}.
\end{equation}

In order to introduce a stopping criteria, we need to point out that we construct a greedy feature ranking by considering, at each step, $q$ splits of the dataset into training and validation sets. Specifically, at the \(k\)-th step of the greedy algorithm, each one of the \(d-k\) datasets, composed by the \( k-1 \) selected features \(x_1,\dots,x_{k-1}\) and the added one \(x_p\) ($p=k, \ldots, d$), is divided into training and validation sets, namely $\{(X_{p,h}^{(t)},Y_{p,h}^{(t)})\}$ and $\{(X_{p,h}^{(v)},Y_{p,h}^{(v)})\}$ respectively, for $h=1,\dots,q$, and fixed $p$.
Hence, once the models ${\cal M}_p$ ($p=k, \ldots, d$) have been trained the \(k\)-th feature is chosen  so that:
\begin{equation}
x_k = \arg \max_{p=k, \ldots, d} \frac{1}{q}\sum_{h=1}^q \mu({\cal M}_p(X_{p,h}^{(v)}),Y_{p,h}^{(v)}), 
\end{equation}
where $\mu$ is the TSS score. 
Then, letting $m_k$ be the average of the TSS scores computed on different splits at the $k$-th step and $\sigma_k$ the associated standard deviation, we stop the greedy iteration at the $k$-th step if:
 \begin{equation}\label{stop}
    \dfrac{|m_{k+1}-m_{k}|}{\sqrt{(\sigma_{k+1}^2+\sigma_{k}^2)} }< \tau,
\end{equation}
and $\tau$ is a given threshold. 
By doing so, we stop the greedy algorithm when the added feature does not contribute to the accuracy score. In order to better understand this fact, we provide in the following a numerical experiment with synthetic data. Dealing with real data, we might stop the greedy iteration as shown in \eqref{stop}, but then select only the first $k^*$ features, where $k^*$ is 
\begin{equation}\label{stop1}
    k^* =\textrm{arg} \max_{j=1,\ldots,k} m_j.
\end{equation}

\section{Numerical experiments}
\label{NE}

The first numerical experiment wants to numerically show the convergence of the greedy algorithm and the efficacy of the stopping rule. Then, we will show an application in the context of space weather, which aims to show how this general method is able to infer on the physical aspects of the problem.  

\subsection{Applications to a toy dataset}\label{simulated}

We first focused on the application of the non-linear SVM greedy technique to a balanced simulated dataset constructed as follows: we considered the set \(X = \{\boldsymbol{x}_i\}_{i=1}^n 
\) of \(n=1000\) random points in dimension \(d=15\) sampled from a uniform distribution over \([0,1)\) and the set of corresponding function values \(\{f_{\alpha,i}=f_{\alpha}(\boldsymbol{x}_i)\}_{i=1}^n\), where \(f_{\alpha}: [0,1)^d \longrightarrow \R\) is defined as 

\begin{equation}\label{testfunction}
\begin{aligned}
    f_{\alpha}(\boldsymbol{x})=   e^{x_1^2}+e^{x_2}+3x_3+2\cos{(x_4x_5)}\\
    +4x_6^2 
    +10^{{\alpha}}\sum_{j=7}^{d} x_j.
\end{aligned}
\end{equation}
and \(\alpha\in\{-8,-6,-4,-2\}\). Each \(f_{\alpha,i}\) was then labeled according to a threshold value to obtain the set of outputs \(Y=\{y_i\}\), i.e., \(y_i = 1\) if \(f_{\alpha,i}\) is greater than the mean value attained by \(f_\alpha\)
, and \(y_i=-1\) otherwise. From \eqref{testfunction} we note that the first 6 features (i.e., \(x_j\) for \(j=1,\dots,6\)) are meaningful for classification purposes when $\alpha$ is lower than $-4$, while the contribution of the remaining ones is negligible. The classifier used in the following was an SVM model for which both the scale parameter of the Gaussian kernel and the bounding box are optimized via standard cross-validation.  The results of using such a classifier into the greedy scheme are reported in Table \ref{tab:simulated_rankings}. Such table contains the greedy ranking of the features \(x_j\), \(j=1,\dots ,d\), and the TSS values obtained at each step by averaging over 7 different validation sets. Letting \(\tau=9 {\rm e}-2\) be the threshold for the stopping criteria in \eqref{stop}, the greedy algorithm selected the features reported in Table \ref{tab:simulated_rankings}, which are above the black solid line. As expected, the algorithm selected only the first six features (the most relevant ones) when $\alpha$ is small enough ($\alpha \leq -6 $). Then, as soon as the remaining features become more meaningful the greedy selection takes into account more features. In this didactic example we report all the TSS values until the end, to emphasise the robustness of our procedure that correctly identified the most relevant features.

\begin{table*}[ht!]
\caption{Feature ranking for the greedy scheme on the  dataset generated as in \eqref{testfunction}. The selected features are identified by the bold line in the table.}
\label{tab:simulated_rankings}
    \centering
    
    \begin{tabular}{|cc|cc|cc|cc|}
        \hline
         \multicolumn{2} {|c|} {\(\alpha=-8\)}
        & \multicolumn{2} {c|} {\(\alpha=-6\)}
        & \multicolumn{2} {c|} {\(\alpha=-4\)}
        & \multicolumn{2} {c|} {\(\alpha=-2\)}
         \\
        
        \hline
        $x_j$  & TSS & $x_j$  & TSS & $x_j$  & TSS & $x_j$  & TSS\\
        \hline
        {\(x_1\)} & 
        0.204 
        \(\pm\) 0.050 
        & {\(x_1\)} & 
        0.204 
        \(\pm\) 0.050 
        & {\(x_1\)} & 
        0.198 
        \(\pm\) 0.048 
 
        & {\(x_1\)} & 
        0.197 
        \(\pm\) 0.034 
\\
        {\(x_6\)} & 
        0.550 
        \(\pm\) 0.049 
        & {\(x_6\)} & 
        0.553 
        \(\pm\) 0.050 

        & {\(x_6\)} & 
        0.558 
        \(\pm\) 0.048 
        & {\(x_6\)} &
        0.553 
        \(\pm\) 0.044 
\\
        {\(x_3\)} & 
        0.798 
        \(\pm\) 0.049 
        & {\(x_3\)} & 
        0.798 
        \(\pm\) 0.049 
        
        & {\(x_3\)} & 
        0.800 
        \(\pm\) 0.051 
        & {\(x_3\)} &
        0.787 
        \(\pm\) 0.041 
\\
        {\(x_2\)} & 
        0.930 
        \(\pm\) 0.030 
        & {\(x_2\)} & 
        0.930 
        \(\pm\) 0.030 
        
        & {\(x_2\)} & 
        0.933 
        \(\pm\) 0.031 
        & {\(x_2\)} &
        0.888 
        \(\pm\) 0.017 
\\
        {\(x_4\)} & 
        0.939 
        \(\pm\) 0.021 
        & {\(x_4\)} & 
        0.939 
        \(\pm\) 0.021 
 
        & {\(x_4\)} & 
        0.939 
        \(\pm\) 0.024 
        & {\(x_4\)} &
        0.895 
        \(\pm\) 0.025 
\\
        {\(x_5\)} & 
        0.954 
        \(\pm\) 0.015 
        
        & {\(x_5\)} & 
        0.954 
        \(\pm\) 0.015 
        & {\(x_5\)} &
        0.961 
        \(\pm\) 0.017 
        & {\(x_{13}\)} &
        0.899 
        \(\pm\) 0.024 
\\
\clineB{1-4}{4}     
        \(x_{12}\) & 
        0.953 
        \(\pm\) 0.014 

        
        & \(x_{12}\) &
        0.953 
        \(\pm\) 0.014 
 
        & \(x_{12}\) & 
        0.953 
        \(\pm\) 0.014 
        & \(x_{8}\) & 
        0.888 
        \(\pm\) 0.034 
\\
        \(x_{13}\) & 
        0.953 
        \(\pm\) 0.022 

        
        & \(x_{13}\) & 
        0.953 
        \(\pm\) 0.022 
 
        & \(x_{9}\) & 
        0.948 
        \(\pm\) 0.022 
        & \(x_{5}\) & 
        0.895 
        \(\pm\) 0.036 
\\
        
        \clineB{5-6}{4}
        \(x_{9}\) & 
        0.946 
        \(\pm\) 0.028 

        
        & \(x_{9}\) & 
        0.946 
        \(\pm\) 0.028 
        & \(x_{13}\) & 
        0.948 
        \(\pm\) 0.025 
        & \(x_{14}\) &
        0.900 
        \(\pm\) 0.035 
\\
\clineB{7-8}{4}
        \(x_{11}\) & 
        0.928 
        \(\pm\) 0.039 

        
        & \(x_{11}\) & 
        0.928 
        \(\pm\) 0.039 
        
        & \(x_{14}\) & 
        0.929 
        \(\pm\) 0.026 
        
        & 
        \(x_{7}\) &
        0.896 
        \(\pm\) 0.039 
\\
 
        \(x_{14}\) & 
        0.932 
        \(\pm\) 0.021 

        
        & \(x_{14}\)  & 
        0.932 
        \(\pm\) 0.021 
        
        & \(x_{11}\) & 
        0.920 
        \(\pm\) 0.027 
        & \(x_{10}\) &
        0.895 
        \(\pm\) 0.039 
\\
        \(x_{7}\) & 
        0.914 
        \(\pm\) 0.023 

        
        & \(x_{7}\)  & 
        0.914 
        \(\pm\) 0.023 
        
        & \(x_{7}\) & 
        0.911 
        \(\pm\) 0.031 
        & \(x_{12}\) &
        0.881 
        \(\pm\) 0.036 
\\
        \(x_{10}\) & 
        0.889 
        \(\pm\) 0.024 
        
        & \(x_{10}\) & 
        0.889 
        \(\pm\) 0.024 
        
        & \(x_{8}\) & 
        0.890 
        \(\pm\) 0.043 
        & \(x_{11}\) &
        0.881 
        \(\pm\) 0.046 
\\
        \(x_{8}\) &  
        0.883 
        \(\pm\) 0.025 
        
        & \(x_{8}\) & 
        0.873 
        \(\pm\) 0.041 
        
        & \(x_{10}\) & 
        0.878 
        \(\pm\) 0.027 
        & \(x_{9}\) &
        0.871 
        \(\pm\) 0.028 
        \\
        \hline
    \end{tabular}

\end{table*}

\subsection{Applications to solar physics: geo-effectiveness prediction}\label{flares}
We now focus on a significant space weather application, i.e., the prediction of severe geo-effectiveness events based on the use of both remote sensing and in-situ data. More specifically, data-driven methods addressing this task typically utilizes features acquired by in-situ instruments at Lagrangian point L1 (i.e., the Lagrangian point between the Sun and the Earth) to forecast a significant increase of the SYM-H index, i.e., the expression of the geomagnetic disturbance at Earth \cite{wanliss2006high}.

\subsubsection{The dataset and the models}
The dataset we used consisted of a collection of solar wind, geomagnetic and energetic indices. In particular, it was composed by  
$N=7888320$ examples and $d=15$ features sampled at each minute starting from  (1-st January 2005) to  (31-st December 2019). Below we summarize the features we used:
\begin{enumerate}
    \item B [nT], the magnetic field intensity, and B$_{\rm x}$, B$_{\rm y}$ and  B$_{\rm z}$  [nT], its three coordinates. 
    \item V [Km/s], the velocity of the solar wind, and V$_{\rm x}$, V$_{\rm y}$ and V$_{\rm z}$ [Km/s], its three coordinates. 
    \item T, the proton temperature, and $\rho$, the proton density number [cm$^{-3}$].
    \item E$_{\rm k}$,  E$_{\rm m}$,  E$_{\rm t}$ the kinetic,  magnetic and total energies. 
    \item H$_{\rm m}$, the magnetic helicity. 
    \item SYM-H [nT], a geomagnetic activity index that quantifies the level of geomagnetic disturbance.
\end{enumerate}
The first ten features were acquired at the Lagrangian point L1 by in-situ instruments, the energies and the magnetic helicity being adimensional derived quantities, and the SYM-H being measured at Earth. The task considered in what follows consisted in identifying the most relevant features used to predict whereas a geo-effective event occurred, i.e., when the SYM-H was less than $-50$ nT (label 1), or not (label -1). The dataset at our disposal was highly unbalanced: the rate of positive events was about 2.5$\%$. In order exploit our data analysis, we first need to fix the notation. We denote by $\tilde{X} = \{\tilde{\boldsymbol{x}_i} \}_{i=1}^N \subseteq \Omega$, where $\Omega \subseteq \mathbb{R}^d$, the set of input samples an by $\tilde{Y}=\{\tilde{y}_i \}_{i=1}^N$, with $\tilde{y}_i \in \{-1,1\}$, the set of associated labels. The features denoted by $\tilde{{x}_j}$, $j=1,\ldots,d$, represent respectively B, B$_{\rm{x}}$, B$_{\rm{y}}$, B$_{\rm{z}}$, V, V$_{\rm{x}}$, V$_{\rm{y}}$, V$_{\rm{z}}$, T, $\rho$, E$_{\rm{k}}$, E$_{\rm{m}}$, E$_{\rm{t}}$, H$_{\rm{m}}$ and the SYM-H. 

The analysis was performed with data aggregated by hours, i.e., letting $m=60$, $n =N/m$ and $$\boldsymbol{x}_i=\dfrac{(\sum_{k=i}^{i+m} \tilde{\boldsymbol{x}_k})}{m},$$ we focused on ${X} = \{{\boldsymbol{x}_i} \}_{i=1}^n \subseteq \Omega$. Similarly, we defined the set of aggregated labels ${Y}=\{{y}_i \}_{i=1}^n$. 

Given $X$ and $Y$, the first step of our study consisted in using different feature selection approaches to rank the features accordingly to their relevance (see Subsection \ref{greedysolar}). After this step, we investigated how these results can be exploited to improve the prediction task (see Subsection \ref{predictionsolar}). In doing so, we used both SVM and a Feed-forward Neural Network (FNN) in order to 
predict whether a geo-effective event occurred or not in the next hour. Specifically, the SVM algorithm was trained by performing a randomized and cross-validated search over 
the hyper-parameters of the model (the regularization parameter \(C\) and the kernel  coefficient \(\gamma\))
taken from uniform distributions on \(I_C=[0.1, 1000]\) and \(I_{\gamma}=[0.001, 0.1]\) respectively. 
Instead, the FNN architecture was characterized by  $7$ hidden layers. The Rectified Linear Unit (ReLU) function was used to activate the hidden layers, the sigmoid activation function was applied to activate the output, and the binary cross-entropy was used as loss function. The model was trained over $200$ epochs using the Adam optimizer with learning rate equal to $0.001$, with a mini-batch size of $64$. In order to prevent overfitting, an $L^2$ regularization constraint was set as $0.01$ in the first two layers. Further, we used an early stopping strategy to select the best epoch with respect to the validation loss.

\subsubsection{Greedy feature selection approaches}
\label{greedysolar}
In order to apply efficiently our greedy strategy to both SVM and FNN, we first considered a subset \( X_p \) of the original dataset \( X \) with a reduced number of examples: we took \( p= 3333 \) examples. 
The so-constructed ranking was compared to a state-of-the-art method, i.e., the Lasso feature selection. Precisely, the active set of features returned by Lasso was composed by: B$_{\rm{x}}$, B$_{\rm{y}}$, B$_{\rm{z}}$, V$_{\rm{y}}$, V$_{\rm{z}}$, T, $\rho$, E$_{\rm{k}}$, E$_{\rm{m}}$, E$_{\rm{t}}$, H$_{\rm{m}}$ and the SYM-H. Note that neither V and B, which are physically meaningful for the considered task, were selected by cross-validated Lasso. 

In Table \ref{tab:greedy_rankings} we report the results of the greedy feature ranking scheme by using SVM and FNN. In this table, the features are ordered accordingly to the greedy selection. In particular, the greedy iteration stopped with all the features reported in the table accordingly to \eqref{stop}, but the selected features were only the ones above the bold line, as in \eqref{stop1}. We can note that, the features selected for both SVM and FNN are only a few, and this is due to the fact that greedy schemes are model-dependent and hence are able to truly capture the most significant ones. We further point out that in order to extract such features, we made use of a validation set and we did not considered any test set, since it was not at our disposal. Therefore, the greedy feature extraction is coherently based on the TSS computed on the validation set, and not on the test set. Nevertheless, we are now interested in understanding how the selected features work in the prediction (on tests sets) of the original task and with all examples.

 
\begin{table}[h!]
    \caption{Feature rankings for the greedy schemes on the dataset used for the prediction of geo-effective solar storms.}
    \label{tab:greedy_rankings}
    \centering
    \begin{tabular}{|cc|cc|}
        \hline
         \multicolumn{2} {|c|} {Greedy ranking (SVM)}
         & \multicolumn{2} {c|} {Greedy ranking (FNN)}
         \\      
        \hline
        $x_j$  & TSS & $x_j$  & TSS\\
        \hline
        {SYM-H} & 0.703 
        \(\pm\) 0.179 
        
        & {SYM-H} & 0.936 
        \(\pm\) 0.052 
        \\
        {B$_{\rm{z}}$} & 0.823 
        \(\pm\) 0.121 
        
        & {B} & 0.943 
        \(\pm\) 0.034 
        \\
        {V} & 0.804 
        \(\pm\) 0.115 
        & {E$_{\rm{t}}$} & 0.958 
        \(\pm\) 0.039 
        
        \\
        \clineB{3-4}{4}
        {E$_{\rm{t}}$} & 0.825 
        \(\pm\) 0.176 
        
        & V$_{\rm{x}}$ & 0.934 
        \(\pm\) 0.078 
        \\
        {V$_{\rm{x}}$} & 0.853 
        \(\pm\) 0.147 
        
        & & \\

        \clineB{1-2}{4}
        
        E$_{\rm{m}}$ & 0.804 
        \(\pm\) 0.184 
        & & \\
        B & 0.835 
        \(\pm\) 0.115 
        & & \\
        \hline
    \end{tabular}
\end{table} 

Interestingly, the features extracted as the most prominent ones are indeed those associated with physical processes involved in the transfer of energy from the CMEs to the Earth's magnetosphere and, thus, with the CME likelihood for inducing geomagnetic storms. B$_{\rm{z}}$, i.e., a southward directed interplanetary magnetic field, is indeed required for magnetic reconnection with the Earth's magnetic field to occur, and thus for the energy carried by the solar wind and/or CMEs to be transferred to the Earth system. In addition, the bulk speed V, or equivalently the radial component of the flow velocity vector V$_{\rm{x}}$, is directly related to the kinetic energy of the solar wind. On the one hand, it is well known that particularly fast particle streams or solar transients can compress the magnetosphere on the sunward side. On the other hand, high levels of magnetic energy (quadratically proportional to the magnetic field intensity) can be converted into thermal energy that heats the Earth's atmosphere, expanding it. In both cases, it appears evident that the transfer of energy, either kinetic or magnetic or total, enabled by the magnetic reconnection between the interplanetary and terrestrial magnetic fields, disrupts the magnetosphere current system, thus causing geomagnetic disturbances. As a conclusion, the extracted features are the physical quantities with the higher expected  predictive capability.

\subsubsection{Prediction of geo-effective solar events with greedy-selected features}
\label{predictionsolar}
In order to numerically validate our greedy procedure we compared the performances of SVM and FNN trained with respectively: all features, the features returned by Lasso, and the greedily selected features. The comparison was performed by computing several scores (reported in Tables \ref{tab:prediction_SVM} and \ref{tab:prediction_FNN}) and by averaging on different splits of the test set: in particular, we computed the TSS as reference score, the Heidke Skill Score (HSS) \cite{Heidke1926}, precision, recall (see equation \eqref{eq:recall_spec}), specificity (see equation \eqref{eq:spec}), F1 score (which is the  harmonic mean of precision and recall), and balanced accuracy (which is the
arithmetic mean between recall and specificity). We can observe that for the SVM-based prediction, when using the features extracted with the greedy procedure, we have a remarkable improvement of all accuracy scores. Further, although the performances of the FNN are essentially the same, independently of the feature selection scheme, we note that we were able to achieve the same accuracy scores with only a few features selected ad hoc (3 in this case). This points out again the fact that features extracted by methods, such as Lasso, might be redundant for the considered classifiers. This is even more evident when using the FNN algorithm, which achieved the same accuracy with only 3 greedily selected features. The improvement in terms of accuracy was remarkable only for SVM classifiers, which is known to be less robust then neural networks to \emph{noise}, i.e., redundant information stored in redundant features.

\begin{table*}
\caption{Average scores obtained with SVM using different subsets of features. 
}
\label{tab:prediction_SVM}
    \centering
    \begin{tabular}{|c|c|c|c|}
    \clineB{1-4}{0.5}
    \multicolumn{1} {|c} {Metric} & \multicolumn{1} {|c|} {All features} & \multicolumn{1} {c|} {LASSO selection} & \multicolumn{1} {c|} {Greedy selection (SVM)} \\
    \hline
    TSS & 0.679 
    \(\pm\) 0.055 
    & 0.677 
    \(\pm\) 0.088 
    & 0.736 
    \(\pm\) 0.051 
    \\
    HSS & 0.731 
    \(\pm\) 0.043 
    & 0.739 
    \(\pm\) 0.040 
    & 0.808 
    \(\pm\) 0.021 
    \\
    Precision & 0.822 
    \(\pm\) 0.117 
    & 0.840 
    \(\pm\) 0.068 
    &  0.909 
    \(\pm\) 0.043 
    \\
    Recall & 0.683 
    \(\pm\) 0.059 
    & 0.681 
    \(\pm\) 0.090 
    &  0.738 
    \(\pm\) 0.052 
    \\
    Specificity & 0.995 
    \(\pm\) 0.005 
    & 0.996 
    \(\pm\) 0.002 
    & 0.998 
    \(\pm\) 0.001 
    \\
    F1 score & 0.737 
    \(\pm\) 0.041 
    & 0.745 
    \(\pm\) 0.039 
    & 0.812 
    \(\pm\) 0.021 
    \\
    Balanced Accuracy & 0.839 
    \(\pm\) 0.027 
    & 0.839 
    \(\pm\) 0.044 
    & 0.868 
    \(\pm\) 0.026 
    \\
    \hline
    \end{tabular}
\end{table*}

\begin{table*}
\caption{Average scores obtained with FNN using different subsets of features.}
\label{tab:prediction_FNN}
    \centering
    \begin{tabular}{|c|c|c|c|}
    \clineB{1-4}{0.5}
    \multicolumn{1} {|c} {Metric} & \multicolumn{1} {|c|} {All features} & \multicolumn{1} {c|} {LASSO selection} & \multicolumn{1} {c|} {Greedy selection (SVM)} \\
    \hline
    TSS & 0.913 
    \(\pm\) 0.054 
    & 0.917 
    \(\pm\) 0.043 
    & 0.895 
    \(\pm\) 0.054 
    \\
    HSS & 0.685 
    \(\pm\) 0.105 
    & 0.638 
    \(\pm\) 0.119 
    & 0.669 
    \(\pm\) 0.128 
    \\ 

    Precision & 0.577 
    \(\pm\) 0.153 
    & 0.519 
    \(\pm\) 0.159 
    
    & 0.571 
    \(\pm\) 0.176 
    \\
    Recall & 0.935 
    \(\pm\) 0.065 
    & 0.945 
    \(\pm\) 0.056 
    & 0.919 
    \(\pm\) 0.068 
    
    \\
    Specificity & 0.978 
    \(\pm\) 0.014 
    & 0.972 
    \(\pm\) 0.017 
    & 0.976 
    \(\pm\) 0.019 
    \\
    F1 score & 0.695 
    \(\pm\) 0.010 
    & 0.650 
    \(\pm\) 0.114 
    & 0.680 
    \(\pm\) 0.122 
    \\
    Balanced Accuracy & 0.957 
    \(\pm\) 0.027 
    & 0.959 
    \(\pm\) 0.022 
    & 
    0.948 
    \(\pm\) 0.027 
    \\
    \hline
    \end{tabular}

\end{table*}

\section{Conclusions and future work}
\label{conclusion}
We introduced a novel class of feature reduction schemes, namely greedy feature selection algorithms. Their main advantage consists in the fact that they are able to identify the most relevant features for any given classifier. We studied their behaviour both analytically and numerically. Analytically, we could conclude that the models constructed in such a way cannot be less expressive than the standard ones (in terms of VC dimension or kernel alignment). Numerically, we showed their efficacy on a problem associated to the prediction of geo-effective events of the active Sun. As the activity of the Sun is cyclic, work in progress consists in using greedy schemes to study which features are relevant on either high or low activity periods. Finally, as there is a growing interest in physics-informed neural networks (PINN), we should investigate, both theoretically and numerically, which are the challenges that greedy methods could achieve in this context.

\section*{Acknowledgements}
Fabiana Camattari and Emma Perracchione kindly acknowledge the support of the Fondazione Compagnia di San Paolo within the framework of the Artificial Intelligence Call
for Proposals, AIxtreme project (ID Rol: 71708). Sabrina Guastavino was supported by the Programma Operativo Nazionale (PON) “Ricerca e Innovazione” 2014–2020. The research by Michele Piana was supported in part by the MIUR Excellence Department Project awarded to Dipartimento di Matematica, Università di Genova, CUP D33C23001110001. All authors are members of the Gruppo Nazionale per il Calcolo Scientifico - Istituto Nazionale di Alta Matematica (GNCS - INdAM).

\bibliography{sn-bibliography}

\end{document}